
\documentclass{article}

\usepackage{microtype}
\usepackage{graphicx}
\usepackage{subfigure}
\usepackage{booktabs} 
\usepackage{xspace}
\usepackage{amsmath}
\usepackage{placeins}
\usepackage{todonotes}

\usepackage{hyperref}


\usepackage{amsmath, amssymb, amsthm}
\usepackage{thmtools,thm-restate}
\usepackage{algorithm, algorithmic}
\newcommand{\E}[2]{\mathbb{E}_{#2}[#1]}
\newcommand{\m}[1]{\mathbf{#1}}
\newcommand{\norm}[1]{\lVert{#1}\rVert}

\newcommand{\abs}[1]{\lvert{#1}\rvert}
\newcommand{\real}[1]{\mathbb{R}^{#1}}
\newcommand{\R}{\mathbb{R}}

\DeclareMathOperator{\sign}{sign}

\DeclareMathOperator{\erf}{erf}

\theoremstyle{plain}

\theoremstyle{definition}
\newtheorem{defn}{Definition}[section]

\theoremstyle{remark}

\newcommand{\surt}{\textsc{Surr.}\xspace}
\newcommand{\spgdt}{\textsc{Sp.G.}\xspace}
\newcommand{\lzot}{\textsc{L.ZO}\xspace}

\newcommand{\sur}{\textsc{Surrogate}\xspace}
\newcommand{\spgd}{\textsc{SparseGrad}\xspace}
\newcommand{\lzo}{\textsc{LocalZO}\xspace}

\usepackage[accepted]{icml2023}
\icmltitlerunning{Energy Efficient Training of SNN using Local Zeroth Order Method}

\begin{document}

\twocolumn[
\icmltitle{Energy Efficient Training of SNN using Local Zeroth Order Method}



\icmlsetsymbol{equal}{*}

\begin{icmlauthorlist}
\icmlauthor{Bhaskar Mukhoty}{equal,mbzuai}
\icmlauthor{Velibor Bojkovi\'c}{equal,mbzuai}
\icmlauthor{William de Vazelhes}{mbzuai}
\icmlauthor{Giulia De Masi}{tii}
\icmlauthor{Huan Xiong}{mbzuai}
\icmlauthor{Bin Gu}{mbzuai}

\end{icmlauthorlist}

\icmlaffiliation{mbzuai}{Department of Macine Learning, Mohamed bin Zayed University of Artificial Intelligence, Abu Dhabi, UAE}
\icmlaffiliation{tii}{Technology Innovation Institute, Abu Dhabi, UAE}

\icmlcorrespondingauthor{Bin Gu}{jsgubin@gmail.com}
\icmlcorrespondingauthor{Huan Xiong}{huan.xiong.math@gmail.com}

\icmlkeywords{Spiking Neural Network, Zeroth Order, Surrogate Gradient}
\vskip 0.3in
]



\printAffiliationsAndNotice{\icmlEqualContribution} 

\begin{abstract}
Spiking neural networks are becoming increasingly popular for their low energy requirement in real-world tasks with accuracy comparable to the traditional ANNs. SNN training algorithms face the loss of gradient information and non-differentiability due to the Heaviside function in minimizing the model loss over model parameters. To circumvent the problem surrogate method uses a differentiable approximation of the Heaviside in the backward pass, while the forward pass uses the Heaviside as the spiking function. We propose to use the zeroth order technique at the neuron level to resolve this dichotomy and use it within the automatic differentiation tool. As a result, we establish a theoretical connection between the proposed local zeroth-order technique and the existing surrogate methods and vice-versa. The proposed method naturally lends itself to energy-efficient training of SNNs on GPUs. Experimental results with neuromorphic datasets show that such implementation requires less than $1\%$ neurons to be active in the backward pass, resulting in a 100x speed-up in the backward computation time. Our method offers better generalization compared to the state-of-the-art energy-efficient technique while maintaining similar efficiency.
\end{abstract}

\section{Introduction}

Biological neural networks are known to be significantly more energy efficient than their artificial avatars - the artificial neural networks (ANN). Unlike ANNs, biological neurons use spike trains to communicate and process information asynchronously. \cite{mainen1995reliability} To closely emulate biological neurons, spiking neural networks (SNN) use binary activation to send information to the neighboring neurons when the membrane potential exceeds the membrane threshold. The event-driven binary activation simplifies the accumulation of input potential and reduces the computation burden when the spikes are sparse. Specialized neuromorphic hardware \cite{davies2018loihi} is designed to carry out such event-driven and sparse computations in an energy-efficient way \cite{pfeiffer2018deep, kim2020spiking}.

There are broadly three categories of training SNNs: ANN-to-SNN conversion, unsupervised and supervised. The first one is based on the principle that parameters for SNN are inferred from the corresponding ANN architecture \cite{cao2015spiking,diehl2015fast,bu2021optimal}. Although training SNNs through this method achieves performance comparable to ANNs, it suffers from long latency needed in SNNs to emulate the corresponding ANN, or from energy expensive retraining of ANNs which is required in order to achieve near lossless conversion \cite{davidson2021comparison}. The unsupervised training is biologically inspired and uses local learning in order to adjust the parameters of the SNN \cite{diehl2015unsupervised}. Although it is the most energy efficient one among the three methods, as it is implementable on neuromorphic chips \cite{davies2018loihi}, it still lags behind in its performance compared to ANN-to-SNN conversion and supervised training.

Finally, supervised training is a method of direct training of SNNs by using back-propagation (through time). As such, it faces two main challenges. The first one is due to the nature of SNNs, or more precisely, due to the Heaviside activation of neurons (applied on the difference between the membrane potential and membrane threshold). As the derivative of the Heaviside function is zero, except at zero where it is not defined, back-propagation does not convey any information for the SNN to learn \cite{eshraghian2021training}. One of the most popular ways to circumvent this drawback is to use surrogate methods, where a derivative of a surrogate function is used in the backward pass during training. Due to their simplicity, surrogate methods have been widely used and have seen tremendous success in various supervised learning tasks \cite{shrestha2018slayer, neftci2019surrogate}. However, large and complex network architectures, the time-recursive nature of SNNs and the fact that the training is oblivious of the sparsity of spikes in SNNs, make surrogate methods quite a time and energy-consuming. 

In order to deal with energy (in)efficiency during direct training of SNNs, only a handful of methods have been proposed,  most of which are concerned with forward propagation in SNNs. \cite{alawad2017stochastic} uses stochastic neurons to increase energy efficiency during inference. More recently \cite{yan2022sparsereg} uses regularization during the training in order to increase the sparsity of spikes which reduces computational burden and energy consumption. \cite{cramer2022surrogate} performs the forward pass on a neuromorphic chip, while the backward pass still takes place on a standard GPU. However, these methods do not significantly reduce the computational weight of the backward pass. On the other side, \cite{nieves2021sparse} introduces a threshold for surrogate gradients (or, suggests using only a surrogate with bounded support). But, introducing gradient thresholds has a drawback of limiting the full potential of surrogates during training. 
In this paper, we propose a direct training method for SNNs which encompasses the full power of surrogate gradients in an energy efficient way. Based on zeroth order techniques and applied locally at neuronal level - hence dubbed Local Zeroth Order (\lzo) - our method is able to simulate arbitrary surrogate functions during the training, and at the same time significantly reduce the number of computational steps in the backward pass, which directly translates to energy saving.

The main contributions of the paper can be summarized as:
\begin{itemize}
\item We introduce zeroth order techniques in SNN training at a local level, while providing theoretical connections with surrogate gradient methods.
\item We experimentally demonstrate the main properties of \lzo, the ability to simulate arbitrary surrogate functions without significant loss in performance as well as its speedup in the backward pass.
\end{itemize}

\section{Background}
The SNN consists of leaky integrate and fire neurons (LIF) that are governed by differential equations in continuous time \cite{gerstner2014neuronal}. They are generally approximated by discrete dynamics given in the form of recurrent equations,
\begin{align}
	u^{(l)}_{i}[t] &= \beta u^{(l)}_{i}[t-1] + \sum_{j} w_{ij} x^{(l-1)}_{j}[t] - x^{(l)}_{i}[t-1] u_{th},\nonumber\\
	x^{(l)}_{i}[t] &= h(u^{(l)}_{i}[t] - u_{th} ) = \begin{cases} 1 & \text{if } u^{(l)}_{i}[t] > u_{th} \\
		0 & \text{otherwise,}	
	\end{cases}
 \label{eq:lif_discrete}
\end{align}
where $u^{(l)}_{i}[t]$ denote the membrane potential of $i$-th neuron in the layer $l$ at time-step (discrete) $t$, which recurrently depends upon its previous potential (with scaling factor $\beta < 1$) and spikes $x^{(l-1)}_{j}[t]$ received from the neurons of previous layers weighted by $w_{ij}$. The neuron generates binary spike $x^{(l)}_{i}[t]$ whenever the membrane potential exceeds threshold $u_{th}$, represented by the Heaviside function $h$, followed by a reset effect on the membrane potential. 

To implement the back-propagation of training loss through the network, one must obtain a derivative of the spike function, which poses a significant challenge in its original form represented as:
\begin{align}
    \frac{dx_i[t]}{du} = \begin{cases}
	\infty & \text{if } u_i^{(l)}[t]= u_{th}\\
	0 & \text{otherwise.}
\end{cases}
\end{align}
where we denote $u:= u_i^{(l)}[t] - u_{th}$. To avoid the entire gradient becoming zero, known as the dead neuron problem, the surrogate gradient method (referred as \sur) redefines the derivative using a surrogate:   
\begin{align}
    \frac{dx_i[t]}{du}:=g(u)
\end{align}
Here, the function $g(u)$ can be, for example, the derivative of the Sigmoid function (see section \ref{sec:surr_to_dist}). To reduce the computational burden of the backward pass, \cite{nieves2021sparse} (referred as \spgd) computes the surrogate gradients only when membrane potential $u_i[t]$ is close to the membrane threshold $u_{th}$.
\begin{align}
    \frac{dx_i[t]}{du} = \begin{cases} g(u_i[t] - u_{th} ) & \text{if, } \abs{u_i[t] - u_{th}} < B_{th}\\
0 & \text{otherwise }
\end{cases}
\end{align}
They introduce a hyper-parameter $B_{th}$ called back-propagation threshold, which controls the fraction of neurons with a non-zero gradient. Fewer neurons participating in the back-propagation can be translated to reduced computational requirement, implying energy savings. However, setting it close to $0$ implies very few active neurons hence no learning but most energy saving, while setting it to $\infty$ enables full surrogate training. Thus, $B_{th}$ determines the energy vs. accuracy trade-off of the \spgd algorithm.  

To address the differentiability issue of SNNs, we intend to employ the zeroth order technique, a popular gradient-free method \cite{liu2020primer}. Consider a function $f: \mathbb{R}^d \rightarrow \mathbb{R}$, that we intend to minimize using gradient descent, for which the gradient may not be available or even undefined. The zeroth-order method estimates the gradients using function outputs.  Given a scalar $\delta > 0$, the 2-point ZO is defines,
\begin{align}
G^{2}(\m w; \m z, \delta) = \phi(d) \frac{f(\m w+\delta \m z )-f(\m w-\delta \m z )}{2\delta} \m z    
\end{align}
where, $\m z \sim \lambda$ is a random direction with $\E{\norm{\m z}^2}{z \sim \lambda}=1$ and $\phi(d)$ is a dimension dependent factor, with $d$ being the dimension. However, in order to approximate the full gradient of $f$ up to a constant squared error, we need an average of $O(d)$ samples of $G^2$, which becomes computationally challenging when $d$ is large, such as the number of learnable parameters of the neural network. Though well studied in the literature, properties of 2-point ZO are known only for the continuous functions \cite{nesterov2017random, berahas2022theoretical}. In the present context, we apply it to the Heaviside function that having a jump discontinuity requires us to derive its theoretical properties.



\section{The \lzo algorithm}
Applying ZO on a global scale is costly due to the large dimensionality of neural networks. Since the non-differentiability of SNN is introduced by the Heaviside function, we apply the 2-point ZO method on $h: \real{} \rightarrow \{0,1\}$ itself,
\begin{align}
\label{eq:lzo}
    G^{2}( u; z, \delta) &= \frac{h(u +z\delta )-h(u-z\delta )}{2\delta}z \nonumber\\
    &= \begin{cases}
	0, & \abs{u} > \abs{z}\delta\\
	\frac{\abs{z}}{2 \delta}, & \abs{u} < \abs{z}\delta\\
\end{cases}
\end{align}

where $u= u_i^{(l)}[t] - u_{th}$ and $z$ is sampled from some distribution $\lambda$. We may average the 2-point ZO gradient over a few samples $z_k$, so that the \lzo derivative of the spike function is defined as:
\begin{equation}
    \frac{dx_i[t]}{dt}:=\frac{1}{m}\sum_{k=1}^{m} G^{2}(u; z_k, \delta)
    \label{eqn:lzo_sum}
\end{equation}
We implement this at the neuronal level of the back-propagation routine, where the forward pass uses the Heaviside function, and the backward pass uses equation \eqref{eqn:lzo_sum}. Note that the gradient $\frac{dx_i[t]}{dt}$ being non-zero naturally determines the active neurons of the backward pass, which can be inferred from the forward pass through the neuron. Algorithm \ref{algo:lzo} gives an abstract representation of the process at a neuronal level, which hints that the backward call is redundant when the neuron has a zero gradient.  

In the energy-efficient implementation of the back-propagation, the optimization of the network weights takes place in a layer-wise fashion through the unrolling of recurrence of equation \eqref{eq:lif_discrete} with respect to time. As the active neurons of each layer for every time step are inferred from the forward pass, gradients of only active neurons are required to be saved for the backward pass, hence saving the memory and computation requirement of the backward pass. One may refer to \cite{nieves2021sparse} for further details of the implementation framework.


\begin{algorithm}[t]
	\caption{\lzo}
	\label{algo:lzo}
	\begin{algorithmic}
	{\small
            \STATE \hspace{-3.5mm}\textbf{Forward} 
            \REQUIRE potential $u := u_i^{(l)}[t] - u_{th}$, distribution $\lambda$, $\delta, m$
            \STATE sample $z_1, z_2, \cdots z_m \sim \lambda$
            \STATE $grad \leftarrow \frac{1}{m}\sum_{k=1}^{m}\mathbb{I}(\abs{u}< \delta \abs{z_k}) \frac{\abs{z_k}}{2\delta}$
            \IF {$grad \neq 0$} 
                \STATE SaveForBackward($grad$)
            \ENDIF
            \STATE \textbf{return} $\mathbb{I}(u> 0)$
            \vspace{1mm}
            \\\hrule
            \vspace{1mm}
            \STATE \hspace{-3.5mm}\textbf{Backward} \, \COMMENT{Invoked if grad is non-zero}
            \REQUIRE gradient from chain rule: $grad\_input$
            \STATE \textbf{return} $grad\_input * grad$ 
	}
\end{algorithmic}
\end{algorithm}
\section{Theoretical Properties of \lzo}
\subsection{General ZO function}
For the theoretical results around \lzo we consider a more general function than what was suggested by \ref{eq:lzo}, in the form
\begin{equation}\label{eq: zo_general}
G^2(u;z,\delta)=\begin{cases}
    0, \quad |u|> |z|\delta\\
    \frac{|z|^\alpha}{2\delta},\quad |u|\leq |z|\delta,
\end{cases}
\end{equation}
where the new constant $\alpha$ is an integer different from 0, while $\delta$ is a positive real number (so, for example, setting $\alpha=1$ in \eqref{eq: zo_general}, we obtain \eqref{eq:lzo}).

The integer $\alpha$ is somewhat a normalizing constant which allows obtaining many different surrogates as the expectation of function $G^2(u; z,\delta)$ when $z$ is sampled from a suitable distribution. In practice, taking $\alpha=\pm 1$ will suffice to account for most of the surrogates found in the literature. The role of $\delta$ is rather different, as it controls the ``shape'' of the surrogate (narrowing it and stretching around zero). The role of each constant will be more clear from what follows (see section \ref{sec:surr_to_dist}).
\subsection{Surrogate functions}
\begin{defn}
\label{def:surr}
We say that a function $g:\R\to \R_{\geq 0}$ is a surrogate function (gradient surrogate) if it is even, non-decreasing on the interval $(-\infty,0)$ and $c:=\int_{-\infty}^\infty g(z)dz<\infty$. 
\end{defn}
Note that the integral $\int_{-\infty}^\infty g(z)dz$ is convergent (as $g(z)$ is non-negative), but possibly can be $\infty$ and the last condition simply means that the function $\frac{1}{c}g(t)$ is a probability density function. The first two conditions, that is, requirements for the function to be even and non-decreasing are not essential but rather practical and are in consistency with examples from SNN literature. 

Note that the function $G:\R\to [0,1]$, defined as $G(t):=\frac{1}{c}\int_{-\infty}^t g(z)dz$ is the corresponding cumulative distribution function (for PDF $\frac{1}{c}g(t)$). Moreover, it is not difficult to see that its graph is ``symmetric'' around point $(0,\frac{1}{2})$ (or in more precise terms, $G(t)=1-G(-t)$), hence $G(t)$ can be seen as an approximation of Heaviside function $h(t)$. Then, its derivative $\frac{d}{dt}G(t)=\frac{1}{c}g(t)$ can serve as an approximation of the ``derivative'' of $h(t)$, or in other words, as its surrogate, which somewhat justifies the terminology. 

Finally, one may note that ``true'' surrogates would correspond to those functions $g$ for which $c=1$. However, the reason we allow $c$ to be different from 1 is again practical and simplifies the derivation of the results that follow. We note once again that allowing general $c$ is in consistency with examples used in the literature.
\subsection{Surrogates and ZO}
To be in line with classic results around ZO method and gradient approximation of functions, we pose ourselves two basic questions: What sort of functions in variable $u$ can be obtained as the expectation of $G^2(u;z,\delta)$ when $z$ is sampled from a suitable distribution $\lambda$, and, given some function $g(u)$, can we find a distribution $\lambda$ such that we obtain $g(u)$ in the expectation when $z$ is sampled from $\lambda$? Two theorems that follow answer these questions and are the core of this section.

The main player in both of the questions is the expected value of $G^2(u;z,\delta)$, so we start by analyzing it more precisely. Let $\lambda$ be a distribution, $\lambda(t)$ its PDF for which we assume that it is even and that $\int_0^{\infty}z^\alpha\lambda(z)dz<\infty$. Then, we may write
\begin{align}
\E{&G^{2}(u; z, \delta)}{z \sim \lambda }=\int\limits_{-\infty}^{\infty}G^{2}(u; z, \delta)\lambda(z)dz\nonumber \\
&= \int\limits_{|u|\leq |z|\delta}\frac{|z|^\alpha}{2\delta}\lambda(z)dz  =\frac{1}{\delta}\int\limits_{\frac{|u|}{\delta}}^\infty z^\alpha \lambda(z) dz. \label{eq: expected zo}
\end{align}
Then, it becomes apparent from \eqref{eq: expected zo} that $\E{G^{2}(u; z, \delta)}{z \sim \lambda }$ has some properties of surrogate functions (it is even, and non-decreasing on $\R_{<0}$). The proofs of the following results are detailed in the appendix. 
\begin{restatable}{lem}{lemmaone}
    Assume further that $\int_{0}^{\infty}z^{\alpha+1}\lambda(z)dz<\infty$. Then, $\E{G^{2}(u; z, \delta)}{z \sim \lambda }$ is a surrogate function.
\end{restatable}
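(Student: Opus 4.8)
The plan is to verify the three defining conditions of a surrogate function (Definition~\ref{def:surr}) for the function
\[
F(u):=\E{G^{2}(u;z,\delta)}{z\sim\lambda}=\frac{1}{\delta}\int_{|u|/\delta}^{\infty}z^{\alpha}\lambda(z)\,dz,
\]
where the second equality is exactly \eqref{eq: expected zo}. Two of the three conditions — that $F$ is even and non-decreasing on $\R_{<0}$ — are essentially already recorded in the text preceding the lemma, so I would dispatch them quickly. Evenness is immediate because $F$ depends on $u$ only through $|u|$. For $u<0$ we have $|u|=-u$, so increasing $u$ toward $0$ lowers the integration limit $-u/\delta$ while the integrand $z^{\alpha}\lambda(z)$ stays non-negative on $z>0$ (here I use that $z^{\alpha}>0$ for $z>0$ for any integer $\alpha$), whence $F$ is non-decreasing there. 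Non-negativity of $F$, and its finiteness for each fixed $u$, follow from the standing hypothesis $\int_{0}^{\infty}z^{\alpha}\lambda(z)\,dz<\infty$ made just before the lemma.

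The substantive content of the lemma is the remaining condition, namely that $c:=\int_{-\infty}^{\infty}F(u)\,du<\infty$, and this is exactly where the new assumption $\int_{0}^{\infty}z^{\alpha+1}\lambda(z)\,dz<\infty$ enters. Using evenness I would first write $c=2\int_{0}^{\infty}F(u)\,du$, so that
\[
c=\frac{2}{\delta}\int_{0}^{\infty}\int_{u/\delta}^{\infty}z^{\alpha}\lambda(z)\,dz\,du.
\]
The key step is to interchange the order of integration. The domain is $\{(u,z):u\geq 0,\ z\geq u/\delta\}=\{(u,z):z\geq 0,\ 0\leq u\leq z\delta\}$, so performing the inner $u$-integral over $[0,z\delta]$ contributes a factor $z\delta$, leaving
\[
c=\frac{2}{\delta}\int_{0}^{\infty}z^{\alpha}\lambda(z)\,(z\delta)\,dz=2\int_{0}^{\infty}z^{\alpha+1}\lambda(z)\,dz,
\]
which is finite precisely by the added assumption. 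As a bonus this identifies the normalizing constant explicitly.

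I expect the only genuine obstacle to be justifying the Fubini--Tonelli interchange together with the accompanying rewriting of the integration region. Since the integrand $z^{\alpha}\lambda(z)\,\mathbb{I}(0\leq u\leq z\delta)$ is non-negative on the quarter-plane $u,z\geq 0$, Tonelli's theorem applies unconditionally and legitimizes the swap regardless of whether the final value turns out finite or infinite; the finiteness then follows from the hypothesis rather than being needed to justify the manipulation. The rest is bookkeeping: collecting the three verified properties shows that $F$ is a surrogate function in the sense of Definition~\ref{def:surr}.
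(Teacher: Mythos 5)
Your proof is correct and follows essentially the same route as the paper's: the paper likewise treats evenness and monotonicity as already established by the remark preceding the lemma, and then computes $\int_{-\infty}^{\infty}\E{G^{2}(u;z,\delta)}{z\sim\lambda}\,du = 2\int_{0}^{\infty}z^{\alpha+1}\lambda(z)\,dz$ by the same interchange of integration order. Your explicit appeal to Tonelli to justify the swap is a nice touch of rigor that the paper leaves implicit, but it is not a different argument.
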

\begin{restatable}{thm}{thmone}
\label{thm: expectation}
Let $\lambda$ be a distribution and $\lambda(t)$ its corresponding PDF. Assume that integrals $\int_0^\infty t^\alpha \lambda(t)dt$ and $\int_0^\infty t^{\alpha+1}\lambda(t)dt$ exist and are finite. Let further $\tilde{\lambda}$ be the distribution with corresponding PDF function
$$
\Tilde{\lambda}(z) = \frac{1}{c} \int\limits_{|z|}^\infty t^\alpha \lambda(t) dt,
$$
where $c$ is the scaling constant (such that $\int_{-\infty}^\infty \tilde{\lambda}(z)dz=1$). 
Then,
$$
\E{G^{2}(u; z, \delta)}{z \sim \lambda }=\frac{d}{du} \E{c\, h(u+\delta z)}{z \sim \tilde{\lambda} }.
$$
\end{restatable}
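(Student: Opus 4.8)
The plan is to prove the identity by evaluating both sides independently and showing they agree: the left-hand side is already in closed form via the computation in \eqref{eq: expected zo}, so the real work is to evaluate the right-hand side and reduce it to that same expression. Before touching the identity, however, I would first confirm that $\tilde\lambda$ is a legitimate PDF, since this is both where the hypotheses are used and the step requiring the most care. Nonnegativity is immediate because $t^\alpha>0$ on $(0,\infty)$ for every nonzero integer $\alpha$. For normalizability I would integrate $\tilde\lambda$ and swap the order of integration (justified by Tonelli, as the integrand is nonnegative) over the region $0\le z\le t$, obtaining
$$
\int_{-\infty}^\infty \tilde\lambda(z)\,dz=\frac{2}{c}\int_0^\infty t^{\alpha+1}\lambda(t)\,dt .
$$
The assumption $\int_0^\infty t^{\alpha+1}\lambda(t)\,dt<\infty$ is exactly what forces a finite normalizing constant $c=2\int_0^\infty t^{\alpha+1}\lambda(t)\,dt$, while $\int_0^\infty t^{\alpha}\lambda(t)\,dt<\infty$ guarantees that $\tilde\lambda(z)$ is finite for all $z$ (including $z=0$, which is the delicate case when $\alpha<0$).

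Next I would unfold the right-hand side. Since $h$ is the Heaviside function and $\delta>0$, we have $h(u+\delta z)=1$ precisely when $z>-u/\delta$, so that
$$
\E{c\,h(u+\delta z)}{z\sim\tilde\lambda}=c\int_{-u/\delta}^{\infty}\tilde\lambda(z)\,dz .
$$
Differentiating in $u$ via the fundamental theorem of calculus with the $u$-dependent lower limit $a(u)=-u/\delta$ (and the chain rule) gives $\frac{d}{du}\int_{-u/\delta}^{\infty}\tilde\lambda(z)\,dz=\frac{1}{\delta}\tilde\lambda(-u/\delta)$, so the right-hand side equals $\frac{c}{\delta}\tilde\lambda(-u/\delta)$. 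I would then invoke the two structural facts about $\tilde\lambda$: it is even by construction (it depends on $z$ only through $|z|$), giving $\tilde\lambda(-u/\delta)=\tilde\lambda(u/\delta)$, and substituting its definition makes the constant $c$ cancel,
$$
\frac{c}{\delta}\,\tilde\lambda(u/\delta)=\frac{1}{\delta}\int_{|u|/\delta}^{\infty} t^\alpha\lambda(t)\,dt ,
$$
which is exactly the expression for $\E{G^{2}(u;z,\delta)}{z\sim\lambda}$ recorded in \eqref{eq: expected zo}. This closes the identity.

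I expect the main obstacle to be rigor in the differentiation step rather than the algebra: applying the fundamental theorem of calculus at the moving limit requires $\tilde\lambda$ to be continuous at the point $-u/\delta$, which I would justify by noting that $\tilde\lambda(z)=\frac{1}{c}\int_{|z|}^{\infty}t^\alpha\lambda(t)\,dt$ is continuous in $z$, being the integral of the integrable function $t^\alpha\lambda(t)$ with a continuously varying lower limit. The remaining book-keeping — the sign coming from the chain rule factor $-1/\delta$, and the cancellation of $c$ — is routine once the PDF verification and the continuity of $\tilde\lambda$ are in place.
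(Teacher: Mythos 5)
Your proof is correct and takes essentially the same route as the paper's: rewrite the expectation as $c\int_{-u/\delta}^{\infty}\tilde{\lambda}(z)\,dz$, differentiate at the moving limit to obtain $\frac{c}{\delta}\tilde{\lambda}(-u/\delta)$, use the evenness of $\tilde{\lambda}$, and substitute its definition to recover exactly the expression \eqref{eq: expected zo} for the left-hand side. The extra care you take — verifying via Tonelli that $\tilde{\lambda}$ is a genuine PDF with $c=2\int_0^\infty t^{\alpha+1}\lambda(t)\,dt$, and justifying the fundamental-theorem step by the continuity of $\tilde{\lambda}$ — are rigor refinements the paper leaves implicit, not a different argument.
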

For our next result, which answers the second question that we asked at the beginning of this section, note that a surrogate function is differentiable almost everywhere, which follows from Lebesgue theorem on differentiability of monotone functions. So, taking derivatives here is understood in an ``almost everywhere'' sense.

\begin{restatable}{thm}{thmtwo}
\label{thm: main2}
    Let $g(u)$ be a surrogate function. Suppose further that $c=-2\delta^2\int_{0}^\infty \frac{1}{z^\alpha}g'(z\delta)dz <\infty$ and put $\lambda(z) = -\frac{\delta^2}{c z^\alpha}g'(z\delta)$ (so that $\lambda(z)$ is a PDF). Then,
    $$c\,\E{G^2(u;z,\delta)}{z\sim \lambda} = \E{c\,G^2(u;z,\delta)}{z\sim \lambda}  = g(u).
    $$
\end{restatable}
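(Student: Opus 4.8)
The plan is to evaluate $c\,\E{G^2(u;z,\delta)}{z\sim\lambda}$ directly by substituting the prescribed $\lambda$ into the closed form \eqref{eq: expected zo} already derived for the expectation, and to show that everything collapses to an integral of $g'$ that telescopes back to $g$. Before the computation I would verify the parenthetical claim that $\lambda$ is a legitimate PDF. Since $g$ is even and non-decreasing on $(-\infty,0)$ (Definition \ref{def:surr}), it is non-increasing on $(0,\infty)$, so $g'(z\delta)\le 0$ for $z>0$ and hence $\lambda(z)=-\frac{\delta^2}{c z^\alpha}g'(z\delta)\ge 0$ there; normalization $\int_{-\infty}^\infty\lambda=1$ then follows by evenness together with the very definition of $c$, since $2\int_0^\infty\lambda(z)\,dz=-\frac{2\delta^2}{c}\int_0^\infty \frac{1}{z^\alpha}g'(z\delta)\,dz=\frac{c}{c}=1$.

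Next I would carry out the main computation. Starting from \eqref{eq: expected zo}, namely $\E{G^2(u;z,\delta)}{z\sim\lambda}=\frac{1}{\delta}\int_{\abs{u}/\delta}^\infty z^\alpha\lambda(z)\,dz$, I multiply by $c$ and insert the expression for $\lambda$; the factor $z^\alpha$ cancels and the constants simplify, giving
\[
c\,\E{G^2(u;z,\delta)}{z\sim\lambda}=-\delta\int_{\abs{u}/\delta}^\infty g'(z\delta)\,dz.
\]
The change of variables $w=z\delta$ (so $dz=dw/\delta$ and the lower limit $z=\abs{u}/\delta$ becomes $w=\abs{u}$) turns this into $-\int_{\abs{u}}^\infty g'(w)\,dw$.

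Finally, this integral telescopes by the fundamental theorem of calculus to $g(\abs{u})-\lim_{w\to\infty}g(w)$. The limit exists because $g$ is monotone on $(0,\infty)$, and it must equal $0$: a strictly positive limit would contradict the integrability $\int_{-\infty}^\infty g<\infty$ required of a surrogate. Hence the expression equals $g(\abs{u})$, and since $g$ is even this is exactly $g(u)$, which is the claim. The identity $c\,\E{G^2}{z\sim\lambda}=\E{c\,G^2}{z\sim\lambda}$ is then just linearity of expectation in the constant $c$.

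The step I expect to require the most care is the passage $-\int_{\abs{u}}^\infty g'(w)\,dw=g(\abs{u})-g(\infty)$. As noted just before the theorem, $g$ is only differentiable almost everywhere (Lebesgue's theorem for monotone functions), and for a merely monotone function the fundamental theorem of calculus can fail because of a singular part, yielding only the inequality $\int_a^b g'\le g(b)-g(a)$. To make the telescoping an equality I would invoke (or state explicitly) the natural extra hypothesis that $g$ is absolutely continuous, which holds for all the smooth surrogates used in practice; the assumed finiteness of the integral defining $c$ then guarantees $\int_{\abs{u}}^\infty g'$ converges and legitimizes both the change of variables and the evaluation. The remaining points—non-negativity of $\lambda$ and finiteness of the intermediate integrals—are routine consequences of the surrogate axioms in Definition \ref{def:surr}.
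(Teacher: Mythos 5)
Your proof is correct and takes essentially the same route as the paper's: substitute the prescribed $\lambda$ into \eqref{eq: expected zo}, cancel $z^\alpha$, change variables $w=z\delta$, and telescope the integral of $g'$ back to $g$. You are in fact more careful than the paper's one-line argument, which silently assumes both that $g(w)\to 0$ as $w\to\infty$ and that $g$ is absolutely continuous --- without the latter one only gets $-\int_{\abs{u}}^\infty g'(w)\,dw \le g(\abs{u})$, with strict inequality if $g$ has a singular part --- so your explicit flagging of the absolute-continuity hypothesis repairs a genuine (if practically harmless) gap in the paper's own proof.
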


\subsection{Obtaining full-surrogates on Expectation} \label{sec: full-surrogates}
\label{sec:dist_to_sur}
In the next sections we spell out the results of Theorem \ref{thm: expectation} applied to some standard distributions, with $\alpha=1$. For clarity, all the parameters of the distributions are chosen in such a way that the scaling constant of the resulting surrogate is 1. One may consult Figure \ref{fig:lzo_distribution} for the visual representation of the results, while the details are provided in the appendix.

\subsubsection{From standard Gaussian}
Recall that the standard normal distribution $N(0,1)$ has PDF of the form $\frac{1}{\sqrt{2\pi}}\exp(-\frac{z^2}{2})$. Consequently, it is straightforward to obtain
\begin{align}
\E{G^{2}(u; z, \delta)}{z \sim \lambda } &=  \frac{1}{\sqrt{2 \pi}}\int_{-\infty}^{\infty} \frac{\abs{z}}{2 \delta} \exp(-\frac{z^2}{2}) dz \nonumber \\
&=\frac{1}{\delta \sqrt{2\pi}} \exp(-\frac{u^2}{2\delta^2}).
\end{align} 
\subsubsection{From Uniform Continuous}
Consider the PDF of a continuous uniform distribution: 
$$f(z; a, b) = \begin{cases}
    \frac{1}{b-a} & \text{for} \, z \in [a,b]\\
    0    & \text{otherwise},
\end{cases}$$ 
where $a<b$ are some real numbers. For the distribution to be even and the resulting scaling constant of the surrogate to be 1 (which translates to $\E{z}{}=0$ and $\E{z^2}{}=1$, respectively) we set, $a=-\sqrt{3}$, $b=\sqrt{3}$. Then,
\begin{align}
\E{G^{2}(u; z, \delta)}{z \sim \lambda }=  \int_{-\infty}^{\infty}  \frac{\abs{z}}{2\delta} f(z) dz \nonumber \\
    = \begin{cases}
        \frac{1}{4\sqrt{3}\delta } (3 - \frac{u^2}{\delta^2}) & \text{if } \frac{\abs{u}}{\delta} < \sqrt{3}, \\
        0 & \text{otherwise.}
    \end{cases}
 \end{align}

\subsubsection{From Laplacian Distribution}
The PDF of Laplace distribution is given by:
$$f(z; \mu, b) = \frac{1}{2b} \exp(-\frac{\abs{z-\mu}}{b})$$ 
with mean $\mu$ and variance $2b^2$. Setting, $b=\frac{1}{\sqrt{2}}$ and $\mu=0$
and using \eqref{eq:lzo} we obtain,
\begin{align}
    &\E{G^{2}(u; z, \delta)}{z \sim \lambda }
    =\frac{2}{\sqrt{2}}\int_{\frac{\abs{u}}{\delta}}^{\infty}  \frac{\abs{z}}{2 \delta} \exp(-\sqrt{2}\abs{z})dz\nonumber\\ 
    &= \frac{1}{2\delta} (\frac{\abs{u}}{\delta}+\frac{1}{\sqrt{2}})  \exp(-\sqrt{2}\frac{\abs{u}}{\delta}).
 \end{align}

\begin{figure*}
\begin{center}
\centerline{
\includegraphics[width=0.32 \textwidth]{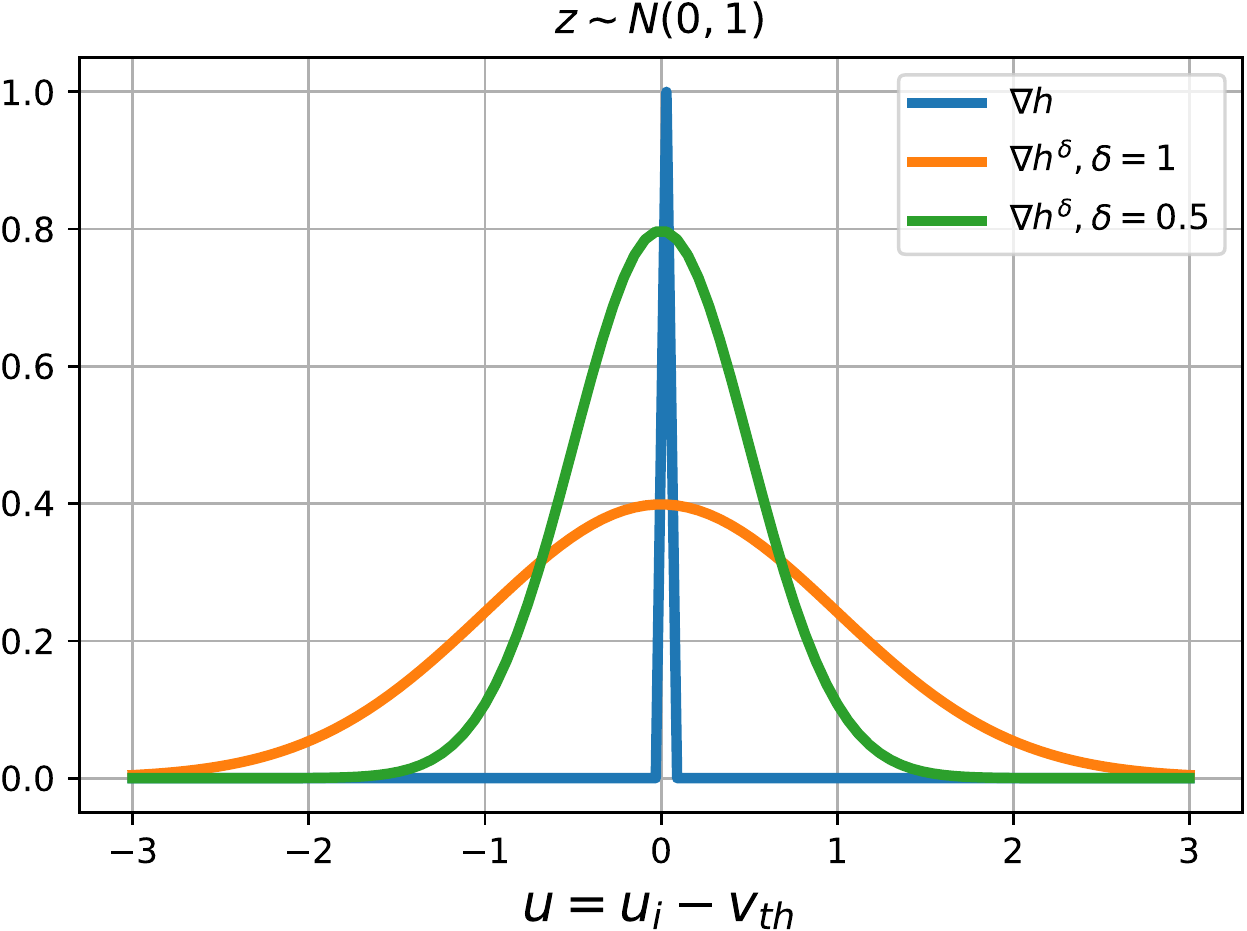}
\includegraphics[width=0.32 \textwidth]{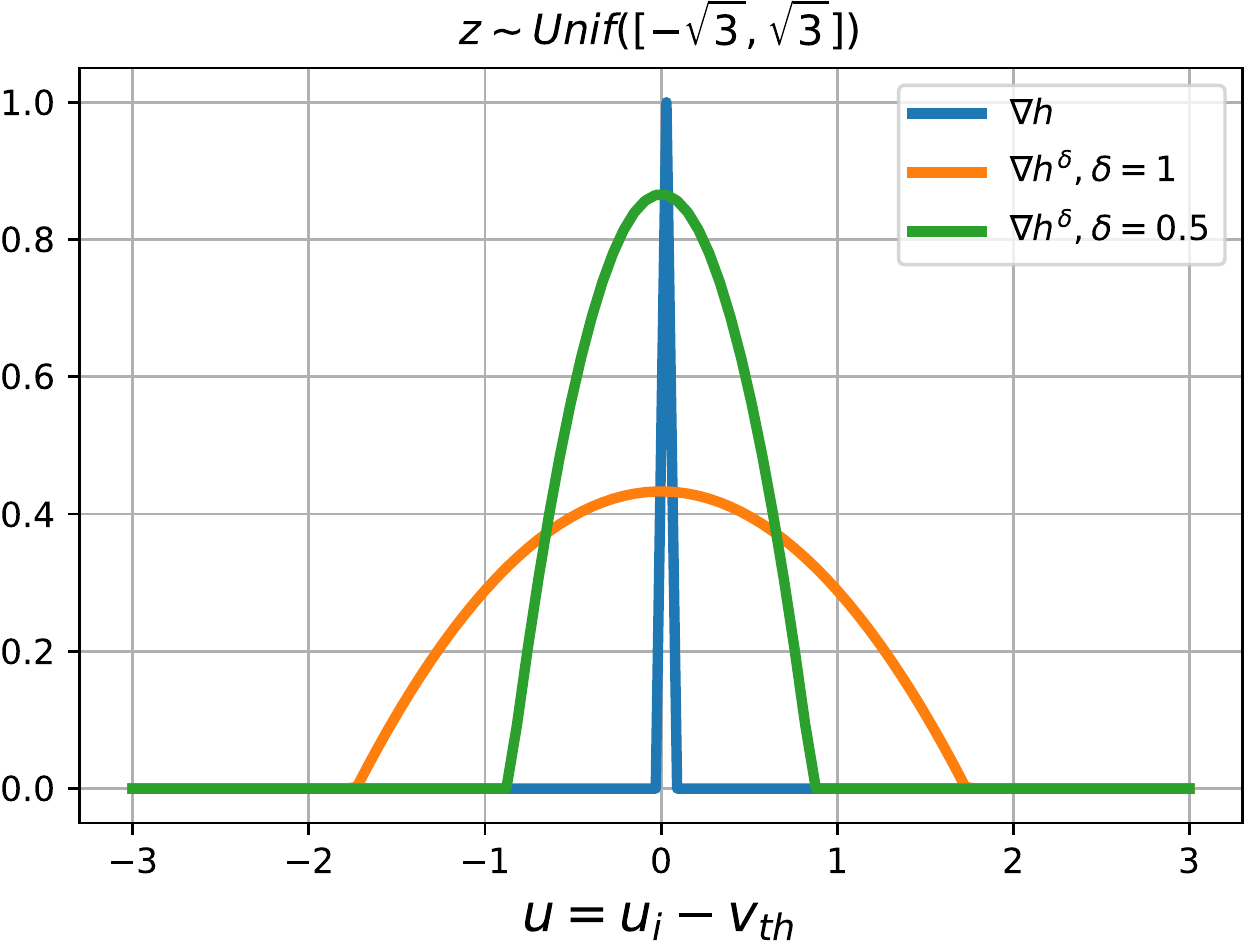}
\includegraphics[width=0.32 \textwidth]{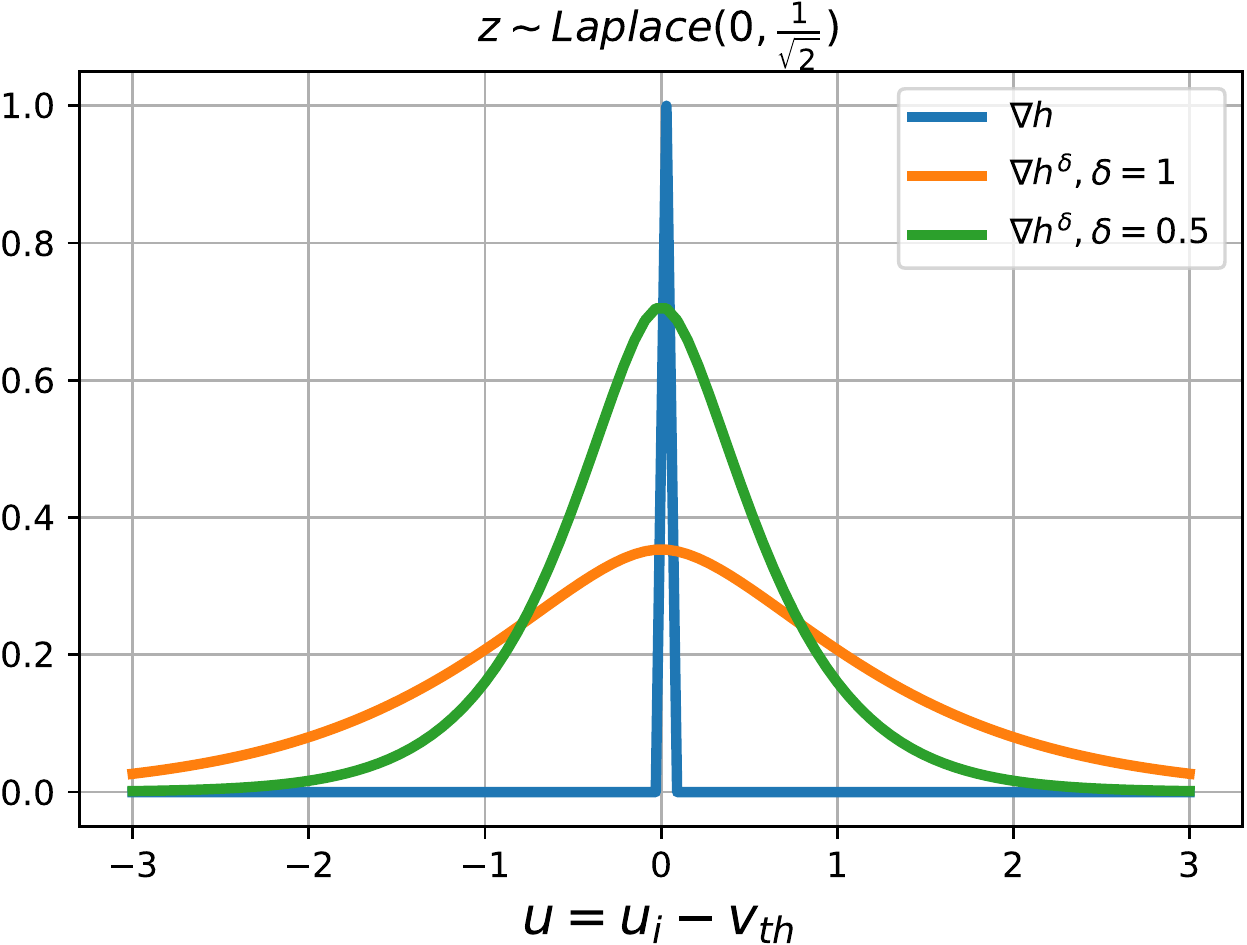}}
\caption{The figure shows the expected surrogates derived in section \ref{sec:dist_to_sur} as $z$ is sampled from Normal$(0,1)$, Unif$([\sqrt{3}, \sqrt{3}])$ and Laplace$(0, \frac{1}{\sqrt{2}})$ respectively. Each figure shows the surrogates corresponding to $\delta \rightarrow 0$, $\delta=0.5$ and $1$. The surrogates are used by \sur and \spgd methods for a fair comparison with \lzo as the latter uses respective distributions to sample $z$.}
\label{fig:lzo_distribution}
\end{center}
\vskip -0.3in
\end{figure*}

\subsection{Expected back-propagation threshold for \lzo}
\label{sec:expected_th}
One of the key properties of \lzo is that in the backward pass it is able to simulate arbitrary surrogate functions, but still be energy efficient. The latter is visible in equations \eqref{eq:lzo} and \eqref{eq: zo_general} as $G^2(u_i^{(l)}[t] - u_{th};z,\delta)$ is zero whenever $|u_i^{(l)}[t] - u_{th}|>\abs{z}\delta$. The role of the back-propagation threshold is then played by the value $\abs{z}\delta$, and since $z$ is sampled from a suitable distribution, we study what is the expected value of this quantity. 


In what follows, $m$ is the number of samples used in \eqref{eqn:lzo_sum}, while $k$ is the index of a particular sample. To compute the expected back-propagation threshold, we observe that a neuron is inactive in \lzo back-propagation if,
\begin{align*}
 \abs{u_i^{(l)}[t] - u_{th}} > \abs{z_k}\delta, \quad \text{for $k=1,\dots,m$}, \\   
\text{or,} \abs{u_i^{(l)}[t] - u_{th}} > t\delta, \,\text{where} \, t=\max_k\{\abs{z_1}, \cdots,\abs{z_m}\}
\end{align*}
 Assume $z_k \sim \lambda$, where $\lambda(t)$ denotes the PDF of the sampling distribution, with the corresponding CDF denoted by $F_{z_k}$. The PDF, $\tilde{\lambda}$, of the random variable $\abs{z_k}$ is given by
\begin{align}
\tilde{\lambda}(x)=\begin{cases}
   0, & \text{if } x <0 \\
   2\lambda(x), & \text{otherwise.}
\end{cases}    
\end{align}

The corresponding CDF is obtained by integrating the previous expression,
\begin{align}
F_{\abs{z_{k}}}(x)=\begin{cases}
   0, & \text{if } x <0 \\
   2(F_{z_k}(x) - F_{z_k}(0)), & \text{otherwise.}
\end{cases}
\end{align}
Further note that, 
\begin{align}
    F_{t}(x)=P(t < x)=\prod_{k=1}^{m} P(\abs{z_k} < x) = F_{\abs{z_k}}^m(x)
\end{align}
If we denote the PDF of the random variable $t$ as $\hat{\lambda}$, we obtain
\begin{align}
\hat{\lambda}(x)= m F_{\abs{z_k}}^{m-1}(x) \tilde{\lambda}(x).    
\end{align}
Finally, the expected back-propagation threshold takes the form 
\begin{align}
\tilde{B}_{th}=\delta\E{t}{} = \delta \int_{0}^{\infty} t \hat{\lambda}(t)dt.
\end{align}
In the cases of distributions used in experimental sections, the previous expression simplifies. Table \ref{tab:expected_th} gives the numerical values for some particular $m$. (See appendix for details.)
\begin{table}
\caption{Computing the expected back-propagation thresholds}
\label{tab:expected_th}
\begin{center}
\begin{tabular}{cccc} 
\toprule
\multicolumn{2}{c}{$z \sim \lambda$ } & \multicolumn{2}{c}{$\tilde{B}_{th}/\delta $ }\\
\hline
$\lambda$& $F_{\abs{z_k}}(x)$ & 	$m=1$ & $m=5$  \\
\hline
Normal$(0,1)$ &$\erf(\frac{x}{\sqrt{2}})$ &$0.798$ & $1.569$ \\
Unif$([\sqrt{3}, \sqrt{3}])$ & $\frac{x}{\sqrt{3}}$ &$0.866$ & $1.443$ \\
Laplace$(0, \frac{1}{\sqrt{2}})$& $1-\exp(-\sqrt{2} x)$ &$0.707$ & $1.615$\\
\bottomrule
\end{tabular}    
\end{center}
\vskip -0.2in
\end{table}
\vspace{-5mm}
\subsection{Simulating a specific Surrogate}\label{sec:surr_to_dist}
In what follows we use Theorem \ref{thm: main2} to derive corresponding distributions for some other surrogate functions. Complete derivations are delayed to the appendix.
\subsubsection{Sigmoid}
Consider the Sigmoid surrogate function, where the Heaviside is approximated by the differentiable Sigmoid function \cite{zenke2018superspike}. The corresponding surrogate gradient is given by,
\begin{align*}
   \frac{dx}{du} = \frac{d}{du}\frac{1}{1 + \exp(-k u)} = \frac{k \exp(-k u)}{(1 + \exp(-ku))^2} =: g(u)
\end{align*}
Note that $g(u)$ satisfies our definition of a surrogate (being even, non-decreasing on $(-\infty,0)$ and 
$\int_{-\infty}^{\infty} g(u) du = 1<\infty$).
Thus, according to Theorem \ref{thm: main2}, the corresponding PDF is given by
\begin{align}
    \lambda(z)=-\frac{\delta^2}{c}\frac{g'(\delta z)}{z} =a^2\frac{\exp(-k \delta z)(1-\exp(-k\delta z))}{z(1+\exp(-k\delta z))^3}
\end{align}
where, $c=\frac{\delta^2 k^2}{a^2}$ and $a:=\sqrt{\frac{1}{0.4262}}$.
The temperature parameter, $k$, comes from the surrogate to be simulated, while $\delta$ is introduced by \lzo. The expected back-propagation threshold for $m=1$, $\tilde{B}_{th} = \delta \, \E{\abs{z}}{z\sim \lambda }$, with,
\begin{align*}
\E{\abs{z}}{z\sim \lambda } &= 2a^2 \int_{0}^{\infty} z  \frac{\exp(-az)(1-\exp(-az))}{z(1+\exp(-az))^3} dz = \frac{a}{2}.
\end{align*}
\subsubsection{Fast Sigmoid}
Consider also the Fast Sigmoid surrogate \cite{zenke2018superspike, nieves2021sparse} that avoids computing the exponential function in Sigmoid to obtain the gradient: 
\begin{align*}
   \frac{dx}{du} = \frac{1}{(1 + k\abs{u})^2} =: g(u).
\end{align*}
We choose $\alpha=-1$ (note that $\alpha=1$ does not work in this case) and apply theorem \ref{thm: main2} so that the PDF is then given by
\begin{align}
\lambda(z)= -\frac{1}{c} \frac{\delta^2}{z^\alpha}g'(z \delta) = k^2\delta^2  \frac{z \sign(z \delta)}{(1 + k\abs{z \delta})^3}. 
\end{align}
and derive $c=\frac{2}{k}. $We may verify, $\int_{-\infty}^{\infty}\lambda(z)dz=1$,  however, the expected back-propagation threshold, $\tilde{B}_{th}=\delta\E{\abs{z}}{z\sim \lambda }$ does not converge. This implies that direct energy comparison with \spgd for the Fast Sigmoid is not possible, but the way around is to use Fast Sigmoid surrogate with finite support, which is an automatic consequence of the practical implementation of inverse transform sampling discussed in the next section. 
\subsubsection{Inverse Transform Sampling}\label{ssec: inverse transform}
To simulate a given surrogate in \lzo, one needs to sample from the corresponding distribution described by the PDF $\lambda$.  Given a sample $r \sim$ Unif$([0,1])$ and the inverse CDF $\Lambda^{-1}$ of the distribution, the inverse sampling technique returns, $\Lambda^{-1}(r)$, as  a sample from the distribution. If the inverse CDF is not computable analytically from the PDF (or not implementable practically), we may choose a finite support over which the PDF is evaluated at a sufficiently dense set of points, and compute the discretized CDF using the Riemann sum. The inverse discretized CDF is then computed empirically and stored as a list for a finite number of points (spaced regularly) between $[0,1]$. Sampling from the uniform distribution then amounts to randomly choosing the indices of the list and picking the corresponding inverse CDF values.

\section{Experiments} 
\begin{figure*}[!t]
\begin{center}
\centerline{\includegraphics[width=0.33 \textwidth]{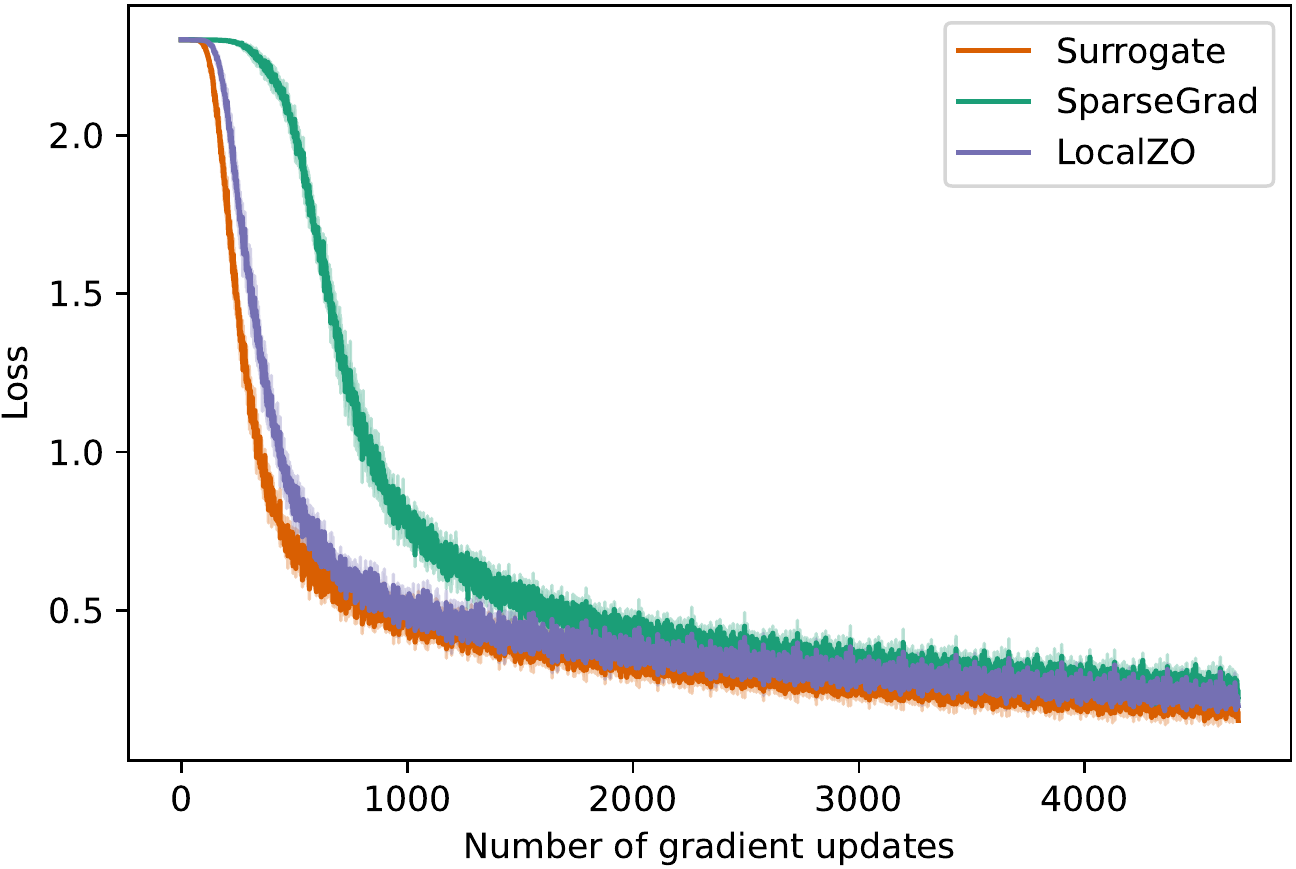}
\includegraphics[width=0.33 \textwidth]{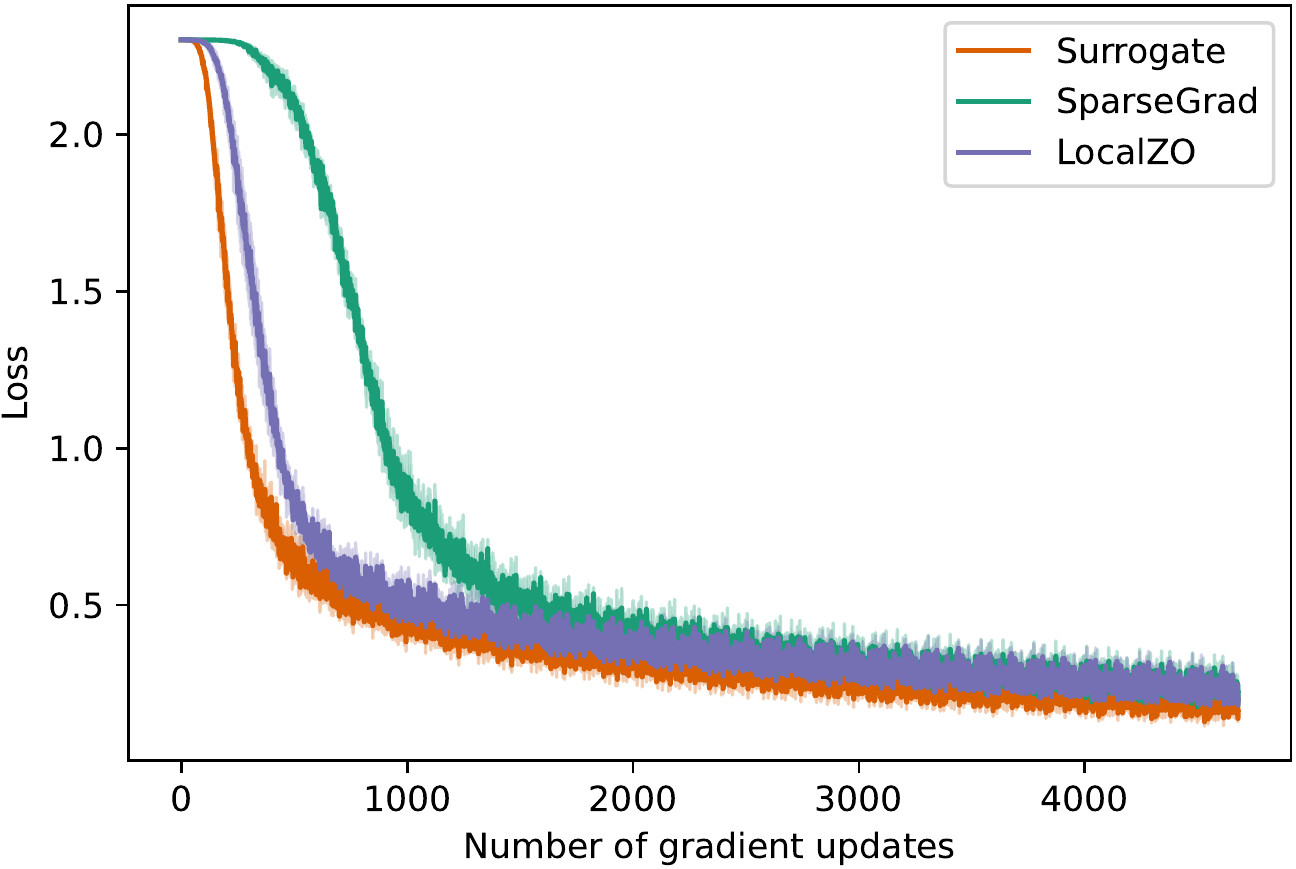}
\includegraphics[width=0.33 \textwidth]{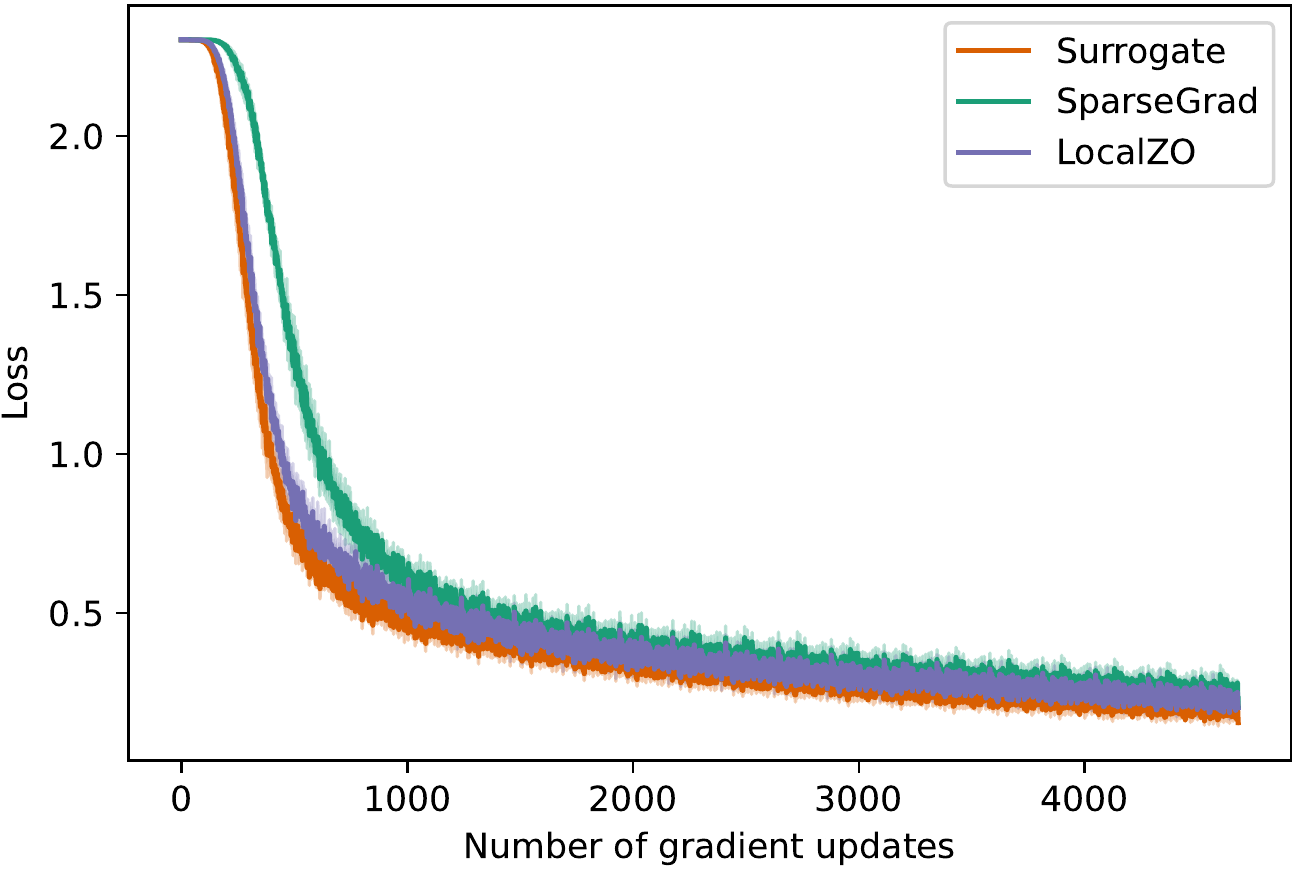}}
\caption{We plot the training loss of the algorithms reported in Table \ref{tab:dist} after each gradient update, for distributions Normal$(0,1)$, Laplace$(1, \frac{1}{\sqrt{2}})$, and Unif$([-\sqrt{3}, \sqrt{3}])$ respectively, for $\delta=0.05$ and $m=1$. The \lzo algorithm converges faster than the \spgd method, resulting in better generalization.}
\label{fig:loss_dist}
\end{center}
\vskip -0.4in
\end{figure*}
\begin{table*}
\caption{Performance comparison over distributions and the corresponding surrogates in NMNIST}
\label{tab:dist}
\begin{center}
\begin{footnotesize}
\begin{sc}
\begin{tabular}{lcccccccc}
\toprule
Method & Train & Test & Back. & Over. & Train & Test & Back. & Over.  \\
\midrule
&\multicolumn{4}{c}{$z \sim$ Normal$(0,1)$, $\delta=0.05, m=1$ }&\multicolumn{4}{c}{$z \sim$ Normal$(0,1)$, $\delta=0.05, m=5$ }\\
\midrule
\sur &95.25 $\pm$ 0.14& 93.70$\pm$ 0.10& 1 & 1 &95.44 $\pm$ 0.22& 93.76$\pm$ 0.10& 1 & 1 \\
\spgd &93.26 $\pm$ 0.31& 91.86$\pm$ 0.29& 99.57 & 3.38 &95.02 $\pm$ 0.29& 93.39$\pm$ 0.25 & 80.0& 3.40\\
\lzo   &94.38 $\pm$ 0.12& 93.29$\pm$ 0.08& 92.27 & 3.34 &95.20 $\pm$ 0.22& 93.69$\pm$ 0.17& 77.7 & 3.22 \\
\midrule
&\multicolumn{4}{c}{$z \sim$ Laplace$(1, \frac{1}{\sqrt{2}})$, $\delta=0.05, m=1$ }&\multicolumn{4}{c}{$z \sim$ Laplace$(1, \frac{1}{\sqrt{2}})$, $\delta=0.05, m=5$ }\\ %
\midrule
\sur   &95.61 $\pm$ 0.16& 93.76$\pm$ 0.08& 1 & 1 &95.42 $\pm$ 0.03& 93.73$\pm$ 0.04& 1 & 1 \\
\spgd  &93.97 $\pm$ 0.43& 92.65$\pm$ 0.52& 88.2 & 3.19 &94.73 $\pm$ 0.29& 93.13$\pm$ 0.23& 72.9 & 3.15\\
\lzo     &94.25 $\pm$ 0.17& 93.05$\pm$ 0.09& 83.7 & 3.07 &95.07 $\pm$ 0.03& 93.63$\pm$ 0.05& 69.4 & 2.80 \\
\midrule
&\multicolumn{4}{c}{ $z \sim$ Unif$([-\sqrt{3}, \sqrt{3}])$, $\delta=0.05, m=1$ }&\multicolumn{4}{c}{ $z \sim$ Unif$([-\sqrt{3}, \sqrt{3}])$, $\delta=0.05, m=5$ }\\ 
\midrule
\sur   &95.15 $\pm$ 0.19& 93.74$\pm$ 0.09& 1 & 1 &95.15 $\pm$ 0.19& 93.74$\pm$ 0.09& 1 & 1\\
\spgd &93.34 $\pm$ 0.44& 91.85$\pm$ 0.35& 83.2 & 3.26 &94.82 $\pm$ 0.27& 93.38$\pm$ 0.17& 76.4 & 3.14 \\
\lzo &94.24 $\pm$ 0.46& 93.05$\pm$ 0.37 & 84.8 & 3.43 &94.95 $\pm$ 0.35& 93.47$\pm$ 0.23 & 73.5 & 2.91 \\
\bottomrule
\end{tabular}
\end{sc}
\end{footnotesize}
\end{center}
\vskip -0.1in
\end{table*}
\begin{figure*}[!t]
\begin{center}
\centerline{\includegraphics[width=0.32\linewidth]{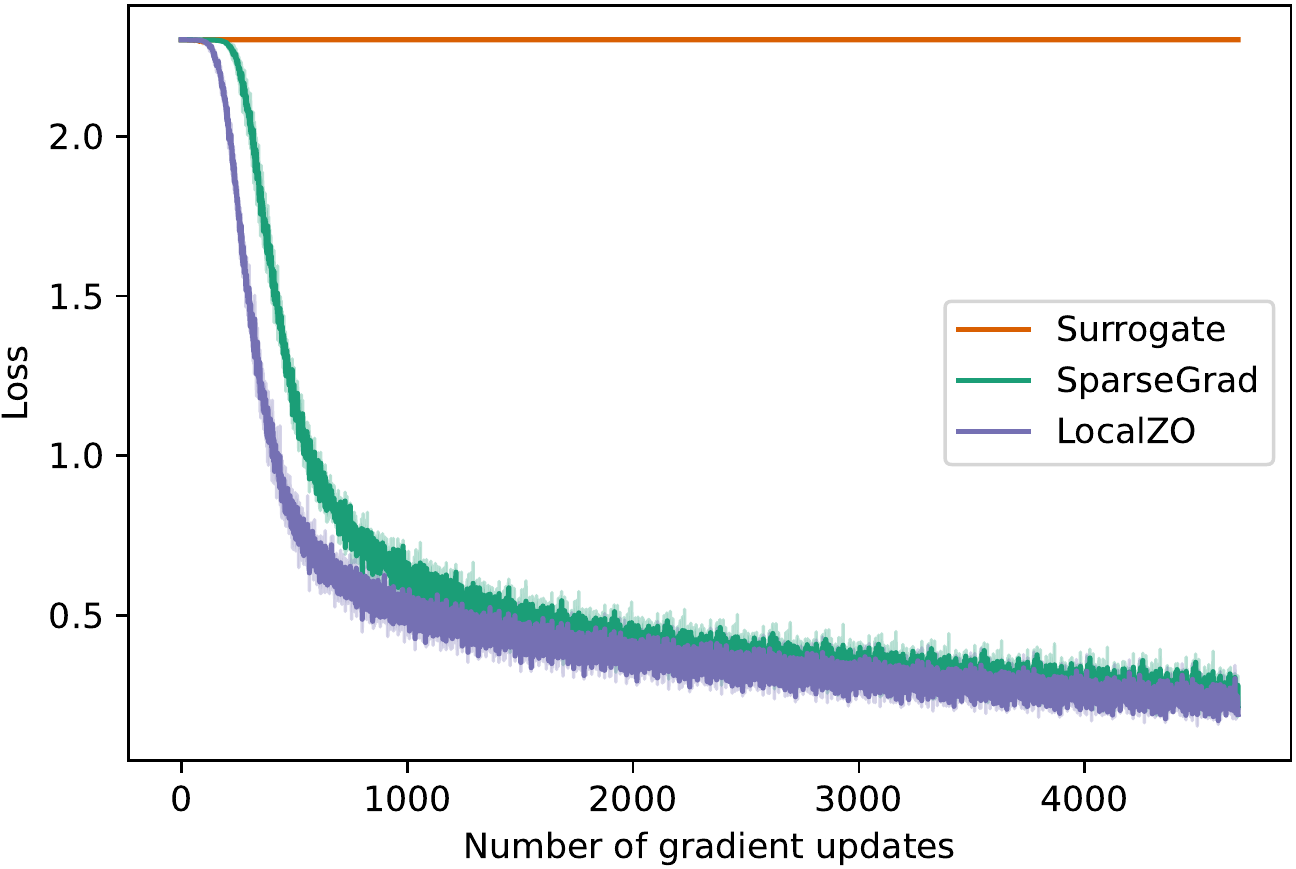}
\includegraphics[width=0.32\linewidth]{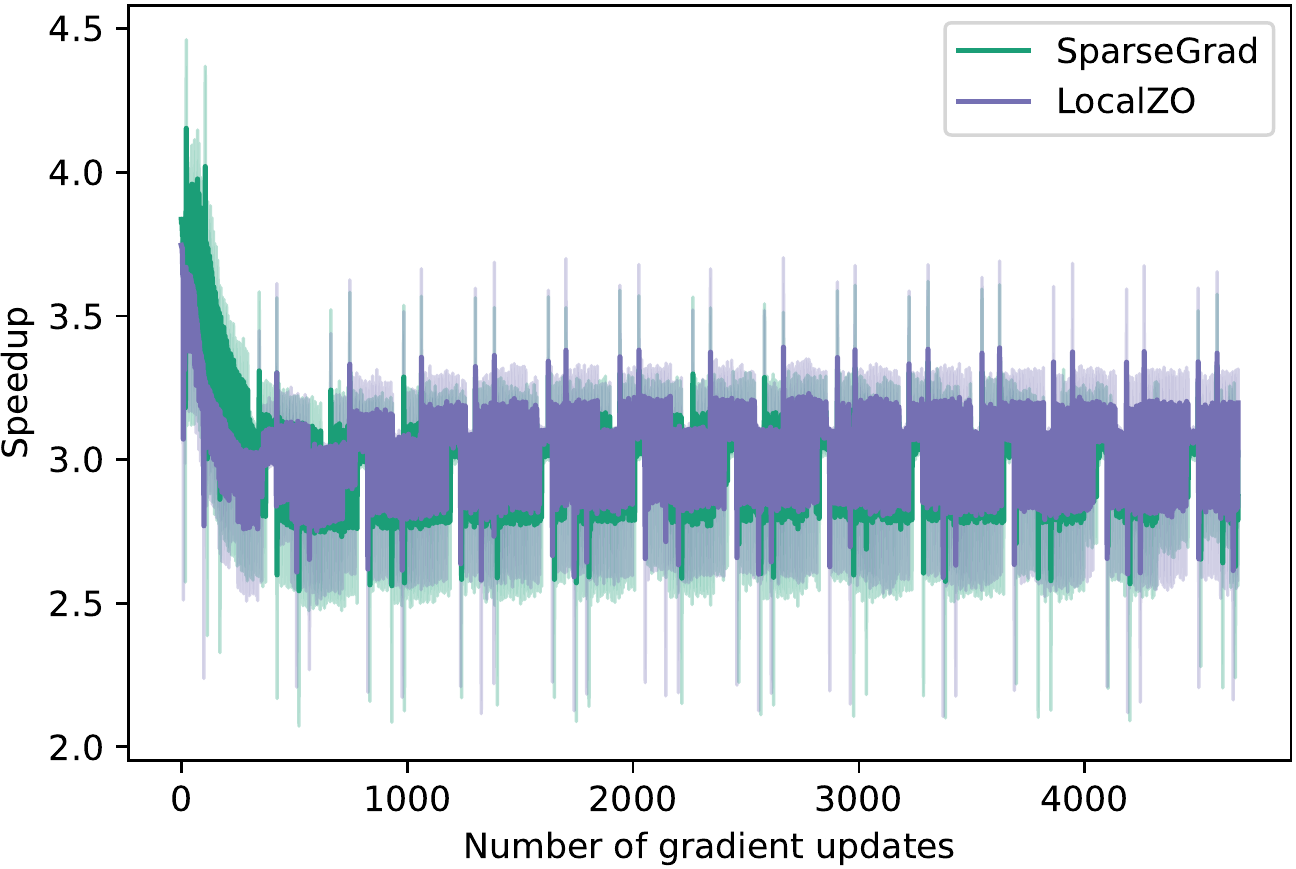}
\includegraphics[width=0.32\linewidth]{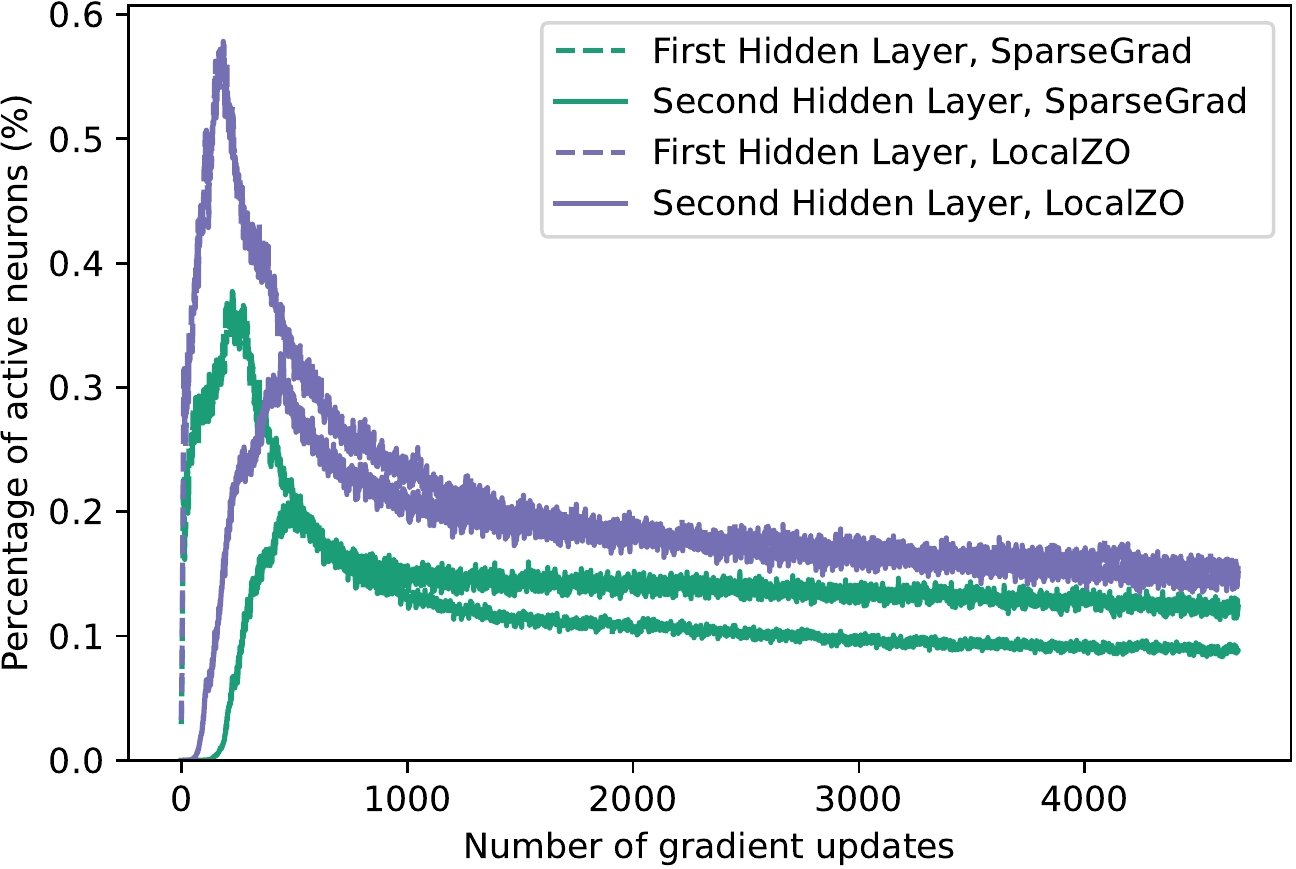}}
\caption{We plot training loss, overall speedup, and percentage of active neurons for the Sigmoid surrogate, as reported in Table \ref{tab:surr}. The \lzo algorithm converges faster than the \spgd method, while having the similar overall speedup. The percentage of active neurons being less 0.6\% explains the reduced computational requirement which translates to backward speedup.}
\label{fig:sigmoid}
\end{center}
\vskip -0.3in
\end{figure*}
We perform classification tasks on popular neuromorphic datasets and compare the performance of the algorithms. The experiments are carried out on an NVIDIA RTX A6000 GPU, with Pytorch CUDA extension.\\
\textbf{Datasets} The Neuromorphic-MNIST or NMNIST \cite{orchard2015converting} is a popular neuromorphic dataset, where static images of handwritten digits between zero and nine are converted to temporal spiking data using visual neuromorphic sensors. The data labeled into ten classes simulates the neuronal inputs captured by the biological visual sensors.  
The Spiking Heidelberg Digits (SHD) \cite{cramer2020heidelberg} is another neuromorphic audio dataset consisting of spoken digits between zero to nine in English and German language amounting to twenty class labels. To challenge the generalizability of the network in practical tasks, 81\% of test inputs of this dataset are new voice samples that are not present in the training data. Finally, the Fashion-MNIST (FMNIST) \cite{xiao2017fashion} dataset uses temporal encoding to convert static gray-scale images based on the principle that each input neuron spikes only once, and a higher intensity spike results in an earlier spike. 

\textbf{Network Architecture and Hyper-parameters} We use fully connected LIF neural network with two hidden layers of 200 neurons each, along with input and output layers, following the \spgd method. We also use the same hyper-parameters mentioned in their work, except for those which we mention explicitly.

\textbf{Comparison with \sur and \spgd} To compare, we supply the \sur and \spgd method the surrogate approximated by \lzo. The surrogates are described in sections \ref{sec:dist_to_sur} and \ref{sec:surr_to_dist}. The \spgd algorithm also requires a back-propagation threshold parameter, $B_{th}$, to control the number of active neurons participating in the back-propagation. We supply it the expected back-propagation threshold $\tilde{B}_{th}$ of \lzo as obtained in sections \ref{sec:expected_th} and \ref{sec:surr_to_dist}. 

\textbf{Performance Metrics} We train every model for 20 epochs and report the average training and test accuracies computed over 5 trials. We compute the speedup of the energy efficient methods, i.e., \spgd and \lzo,  with respect to the \sur method. The backward speedup captures the number of times the backward pass of a gradient update is faster than that of the \sur method. The speedup reported in the experiments is an average over all the gradient updates and the experimental trials. The overall speedup considers the total time required for the forward and the backward pass and then computes the ratio with that of the \sur method. We also compute the active neurons at each layer as a percentage, normalizing by the batch size, number of neurons in the layer, and the latency. The normalization reflects the computation required by a non-sparse gradient so that the percentage of active neurons serves as the proxy of computational savings due to the energy-efficient implementation.

\subsection{From distributions to surrogates}
We demonstrate performance of \lzo over different distributions of $z$, such as standard Normal, Uniform$([\sqrt{3}, \sqrt{3}])$ and Laplace$(0, \frac{1}{\sqrt{2}})$, for $m\in \{1,5\}$ and $\delta=0.05$. The distributions are of unit variance so that parameter $\delta$ is comparable across the methods. We supply \spgd algorithm the back-propagation threshold $\tilde{B}_{th}$ obtained in Table \ref{tab:expected_th}. Table \ref{tab:dist} shows the performance of the methods on the N-MNIST dataset in terms of accuracy and speedup. The \lzo method obtains better train and test accuracies for all cases with a slight compromise in the speedup, except for the uniform distribution where it offers better speedup for $m=1$ compared to the \spgd method. Though we do not report the percentage of active neurons explicitly, they remain comparable and below 1\% for both methods throughout the experiments, ensuring a reasonable backward speedup.

Figure \ref{fig:loss_dist} shows the training loss for the methods after each gradient update for $m=1$, averaged over five trials. The \lzo loss is consistently closer to the \sur loss, which results in better training and test accuracies. In contrast, the fixed truncation used by \spgd affects its gradients. 

\subsection{From surrogates to distributions}
In the section \ref{sec:surr_to_dist}, we derived distributions corresponding to popular surrogates. Using the inverse transform sampling, we implement \lzo, with $m=1, \delta=0.05$.  For the Sigmoid surrogate, we take the temperature parameter $k=a/\delta\approx 30.63$ so that $c=\frac{\delta^2 k^2}{a^2}=1$ and supply \spgd method the corresponding back-propagation threshold, $\Tilde{B}_{th} = 0.766\delta$.
For the Fast Sigmoid surrogate, we choose $k=100$ following \cite{nieves2021sparse} so that $c=\frac{2}{k}$. To compute the expected back-propagation threshold we consider finite support $[-10,10]$ used in the inverse transform sampling of $z$ and evaluate $\tilde{B}_{th} = 0.0461$.

Table \ref{tab:surr} reports the details of the comparison over the NMNIST dataset. \lzo method obtains better test accuracies with a slight reduction in speedup compared to \spgd. The \sur method did not converge with Sigmoid, due to the numerical computation of the exponential.  The \spgd evaluates gradients only for small membrane potentials, so it does not face the issue.  

Figure \ref{fig:sigmoid} shows the training loss, overall speedup, and percentage of active neurons after each gradient step for the Sigmoid surrogate. The sparseness of active neurons (under 0.6\%) explains the reduced computational requirement that translates to the speedup. 

\begin{table}[!t]
\caption{Simulating Surrogates using a distribution on NMNIST}
\label{tab:surr}
\vskip -0.5in
\begin{center}
\begin{footnotesize}
\begin{sc}
\begin{tabular}{lcccc}
\toprule
Method & Train & Test & Back. & Over. \\
\midrule
\multicolumn{5}{c}{Sigmoid, $\delta=0.05, k \approx 30.63, m=1$ }\\ %
\midrule
\spgdt &92.96$\pm$ 0.26& 91.04$\pm$ 0.32& 87.45 & 3.00\\
\lzot    &93.98$\pm$ 0.08& 92.97$\pm$ 0.05& 83.54 & 3.02\\
\midrule
\multicolumn{5}{c}{FastSigmoid, $\delta=0.05, k=100, m=1$ }\\
\midrule
\surt  &93.33$\pm$ 0.05& 91.20$\pm$ 0.11& 1 & 1\\
\spgdt &93.24$\pm$ 0.23& 92.16$\pm$ 0.20& 84.87 & 3.18\\
\lzot  &93.44$\pm$ 0.13& 92.52$\pm$ 0.09& 73.23 & 3.11\\
\bottomrule
\end{tabular}
\end{sc}
\end{footnotesize}
\end{center}
\vskip -0.3in
\end{table}

\subsection{Comparison in other datasets}
Table \ref{tab:dataset_SHD} provides a comparison of the algorithms over the SHD and FMNIST datasets, using surrogates corresponding to the Normal and Sigmoid, with $\delta = 0.05$ and $m=1$. For the Sigmoid, we use the parameters reported in last section. The \sur and \spgd methods are supplied with the corresponding surrogates and back-propagation thresholds. The difference between training and test accuracies for the SHD dataset can be attributed to the unseen voice samples in the test data\cite{cramer2020heidelberg}. The \lzo method offers better test accuracies than \spgd, with a slight compromise in speedup.  
\begin{table}[!t]
\caption{Performance comparison on SHD and FMNIST}
\label{tab:dataset_SHD}
\begin{center}
\begin{footnotesize}
\begin{sc}
\begin{tabular}{lcccc}
\toprule
Method & Train & Test & Back. & Over. \\
\midrule
\multicolumn{4}{c}{SHD, $z \sim$ Normal$(0,1)$, $\delta=0.05, m=1$ }\\ %
\midrule
\surt  &94.58$\pm$ 0.31& 75.48$\pm$ 0.70& 1 & 1\\
\spgdt &92.03$\pm$ 0.79& 74.73$\pm$ 0.73& 143.7 & 4.83\\
\lzot    &91.77$\pm$ 0.27& 76.55$\pm$ 0.93& 142.8 & 4.75\\
\midrule
\multicolumn{4}{c}{SHD, Sigmoid, $\delta=0.05, k \approx 30.63, m=1$ }\\ %
\midrule
\spgdt &92.19$\pm$ 0.41& 75.80$\pm$ 0.97& 140.8 & 4.46\\
\lzot    &91.96$\pm$ 0.11& 76.97$\pm$ 0.40& 133.6 & 4.36\\
\midrule
\midrule
\multicolumn{4}{c}{FMNIST, $z \sim$ Normal$(0,1)$, $\delta=0.05, m=1$ }\\ %
\midrule
\surt  &86.21$\pm$  0.05& 83.35$\pm$ 0.08& 1 & 1\\
\spgdt &81.91$\pm$ 0.10& 80.28$\pm$ 0.11& 15.74 & 1.97\\
\lzot  &83.83$\pm$ 0.07& 81.79$\pm$ 0.06& 15.49 & 1.88\\
\midrule
\multicolumn{4}{c}{FMNIST, Sigmoid, $\delta=0.05, k \approx 30.63, m=1$ }\\ %
\midrule
\spgdt &81.60$\pm$ 0.11& 80.02$\pm$ 0.08& 12.12 & 1.65\\
\lzot  &83.39$\pm$ 0.10& 81.76$\pm$ 0.10& 12.50 & 1.57\\
\bottomrule
\end{tabular}
\end{sc}
\end{footnotesize}
\end{center}
\vskip -0.3in
\end{table}

\section{Discussions} We propose a novel energy efficient algorithm for direct training of spiking neural networks. We implement the technique for fully connected networks, with two hidden layers. Implementing the method for deeper networks should be straightforward as shown in \cite{nieves2021sparse}. However, we leave the adaptation of the method with convolutional layers for the future work. Our work generates theoretical insights on how zeroth order techniques elegantly handle learning through the discontinuous function such as Heaviside and relates it to the standard surrogate techniques, which we believe can be of interest to future works. 

\section*{Acknowledgement} This work is part of the research project "ENERGY-BASED PROBING FOR SPIKING NEURAL NETWORKS" performed at Mohamed bin Zayed University of Artificial Intelligence~(MBZUAI), funded by  Technology Innovation Institute~(TII) (Contract No. TII/ARRC/2073/2021)

\FloatBarrier
\bibliography{robustness}
\bibliographystyle{icml2023}

\newpage
\appendix
\onecolumn
\section{Proofs of theoretical results}
\lemmaone*
\begin{proof}
Based on our remark above, the only thing left to prove is that the integral $\int_{-\infty}^\infty \E{G^{2}(u; z, \delta)}{z \sim \lambda }du$ is finite. To this end, we have (by using equation \eqref{eq: expected zo})
\begin{align*}
    \int_{-\infty}^\infty \E{G^{2}(u; z, \delta)}{z \sim \lambda }du &= \int_{-\infty}^\infty\frac{1}{\delta}\int_{\frac{|u|}{\delta}}^\infty z^\alpha \lambda(z) dz du = \frac{2}{\delta}\int_{0}^\infty\int_{\frac{|u|}{\delta}}^\infty z^\alpha \lambda(z)dz du\\
    &= \frac{2}{\delta}\int_{0}^\infty\int_{0}^{|z|\delta}|z|^\alpha \lambda(z) du dz = 2\int_{0}^\infty z^{\alpha+1} \lambda(z) dz, 
\end{align*}
which proves the lemma, as by assumptions the resulting integral is finite.
\end{proof}
\thmone*
\begin{proof}
We have \begin{align*}
\frac{d}{du} &\E{c\, h(u+\delta z)}{z \sim \tilde{\lambda} }
= c\,\frac{d}{du}\int_{-\infty}^\infty h(u+\delta z)\Tilde{\lambda}(z)dz=c\, \frac{d}{du}\int_{-\frac{u}{\delta}}^{\infty}\tilde{\lambda}(z) dz
= \frac{c}{\delta}\,\Tilde{\lambda}(-\frac{u}{\delta})=\frac{c}{\delta}\, \Tilde{\lambda}(\frac{u}{\delta}),
\end{align*}
which coincides with \eqref{eq: expected zo}.
\end{proof}
For our next result, which answers the second question that we asked at the beginning of this section, note that a surrogate function is differentiable almost everywhere, which follows from Lebesgue theorem on differentiability of monotone functions. So, taking derivatives here is understood in an ``almost everywhere'' sense.

\thmtwo*
\begin{proof}
Let us assume that $u\geq 0$ (the other case is similar). Then, 
\begin{align*}
\E{cG^2(u;z,\delta)}{z\sim \lambda} = \frac{c}{\delta}\int_{\frac{u}{\delta}}^\infty z^\alpha \lambda(z)dz =-\frac{1}{\delta}\int_{\frac{u}{\delta}}^\infty z^\alpha \frac{\delta^2}{z^\alpha}g'(z\delta)dz
\end{align*}
which after change of variables $u=\delta z$ becomes $g(u)$ and finishes our proof.
\end{proof}
\subsection{Obtaining full-surrogates on Expectation}
\subsubsection{From standard Gaussian}
Recall that the standard normal distribution $N(0,1)$ has PDF of the form $\frac{1}{\sqrt{2\pi}}\exp(-\frac{z^2}{2})$. Consequently, it is straightforward to obtain
\begin{align}
\E{G^{2}(u; z, \delta)}{z \sim \lambda } &=  \frac{1}{\sqrt{2 \pi}}\int_{-\infty}^{\infty} \frac{\abs{z}}{2 \delta} \exp(-\frac{z^2}{2}) dz =\frac{1}{\delta \sqrt{2\pi}} \exp(-\frac{u^2}{2\delta^2}).
\end{align} 

\subsubsection{From Uniform Continuous}
Consider the PDF of a continuous uniform distribution: 
$$f(z; a, b) = \begin{cases}
    \frac{1}{b-a} & \text{for} \, z \in [a,b]\\
    0    & \text{otherwise},
\end{cases}$$ 
where $a<b$ are some real numbers. For the distribution to be even and the resulting scaling constant of the surrogate to be 1 (which translates to $\E{z}{}=0$ and $\E{z^2}{}=1$, respectively) we set, $a=-\sqrt{3}$, $b=\sqrt{3}$. Then,
\begin{align}
\E{G^{2}(u; z, \delta)}{z \sim \lambda }&=  \int_{-\infty}^{\infty}  \frac{\abs{z}}{2\delta} f(z) dz \nonumber \\
    &=\frac{1}{2\sqrt{3}}[\int_{-\sqrt{3}}^{-\frac{\abs{u}}{\delta}}  \frac{\abs{z}}{2 \delta}  dz + \int_{\frac{\abs{u}}{\delta}}^{\sqrt{3}}  \frac{\abs{z}}{2 \delta}  dz]
    =\frac{1}{4\sqrt{3}\delta } z^2 \biggr \rvert_{\frac{\abs{u}}{\delta}}^{\sqrt{3}} \nonumber \\
    &= \begin{cases}
        \frac{1}{4\sqrt{3}\delta } (3 - \frac{u^2}{\delta^2}) & \text{if } \frac{\abs{u}}{\delta} < \sqrt{3}, \\
        0 & \text{otherwise.}
    \end{cases}
 \end{align}

\subsubsection{From Laplacian Distribution}
The PDF of Laplace distribution is given by:
$$f(z; \mu, b) = \frac{1}{2b} \exp(-\frac{\abs{z-\mu}}{b})$$ 
with mean $\mu$ and variance $2b^2$. Setting, $b=\frac{1}{\sqrt{2}}$ and $\mu=0$
and using \eqref{eq:lzo} we obtain,
\begin{align}
    &\E{G^{2}(u; z, \delta)}{z \sim \lambda }
    =\frac{2}{\sqrt{2}}\int_{\frac{\abs{u}}{\delta}}^{\infty}  \frac{\abs{z}}{2 \delta} \exp(-\sqrt{2}\abs{z})dz 
    =\frac{1}{\delta \sqrt{2}}\int_{\frac{\abs{u}}{\delta}}^{\infty}  z \exp(-\sqrt{2}z)dz \nonumber \\
    &=-\frac{1}{\delta \sqrt{2}} (\frac{z}{\sqrt{2}}+\frac{1}{2})  \exp(-\sqrt{2}z)\biggr \rvert_{\frac{\abs{u}}{\delta}}^{\infty} 
    = \frac{1}{2\delta} (\frac{\abs{u}}{\delta}+\frac{1}{\sqrt{2}})  \exp(-\sqrt{2}\frac{\abs{u}}{\delta}).
 \end{align}
\subsection{Expected Back-propagation Thresholds}
To obtain an expected back-propagation threshold, we would like to evaluate:
\begin{align*}
    \tilde{B}_{th}=\delta\E{t}{} = \delta \int_{0}^{\infty} t \hat{\lambda}(t)dt = \delta m \int_{0}^{\infty} t F_{\abs{z}}^{m-1}(t) \tilde{\lambda}(t)dt
\end{align*}
For the standard normal distribution, $\lambda=$ Normal$(0,1)$ we have,
$F_{\abs{z}}(t)=\erf(\frac{t}{\sqrt{2}})$ giving,
\begin{align}
    \tilde{B}_{th}=\frac{2\delta m}{\sqrt{2 \pi}} \int_{0}^{\infty} t \erf^{m-1}(\frac{t}{\sqrt{2}}) \exp(-\frac{t^2}{2})dt.
\end{align}

For uniform continuous, $\lambda=$ Unif$([-\sqrt{3}, \sqrt{3}])$,  we have,  $F_{\abs{z}}(t)=\frac{t}{\sqrt{3}}$ giving,
\begin{align}
    \tilde{B}_{th}=\frac{\delta m}{\sqrt{3}} \int_{0}^{\sqrt{3}} t (\frac{t}{\sqrt{3}})^{m-1}dt = \delta \sqrt{3} \frac{m}{m+1}.
\end{align}

For Laplace distribution, $\lambda=$ Laplace$(0, \frac{1}{\sqrt{2}})$, we have,  $F_{\abs{z}}(t)=1-\exp(-\sqrt{2} t)$,
\begin{align}
    \tilde{B}_{th}=\delta m \sqrt{2} \int_{0}^{\infty} t (1-\exp(-\sqrt{2} t))^{m-1} \exp(-\sqrt{2}t)dt.
\end{align}

\subsection{Simulating a specific Surrogate}
\subsubsection{Sigmoid}
Consider the Sigmoid surrogate function, where the Heaviside is approximated by the differentiable Sigmoid function \cite{zenke2018superspike}. The corresponding surrogate gradient is given by,
\begin{align*}
   \frac{dx}{du} = \frac{d}{du}\frac{1}{1 + \exp(-k u)} = \frac{k \exp(-k u)}{(1 + \exp(-ku))^2} =: g(u)
\end{align*}
and, $$g'(u) = -\frac{k^2 \exp(-ku)(1-\exp(-ku))}{(1+\exp(-ku))^3}$$
Observe that $g(u)$ satisfies our definition of a surrogate ($g(u)$ being even, non-decreasing on $(-\infty,0)$ and 
$\int_{-\infty}^{\infty} g(u) du = 1<\infty$).
Thus, according to Theorem \ref{thm: main2}, we have
\begin{align*}
    &c=-2\delta^2\int_{0}^\infty \frac{g'(t\delta)}{t}dt
    =2\delta^2 k^2\int_{0}^\infty \frac{\exp(-k \delta t)(1-\exp(-k\delta t))}{t(1+\exp(-k\delta t))^3} dt
    = \frac{\delta^2 k^2}{a^2},
\end{align*}
where,  $a:=\sqrt{\frac{1}{0.4262}}$. The corresponding PDF is given by
\begin{align}
    \lambda(z)=-\frac{\delta^2}{c}\frac{g'(\delta t)}{z}
    =a^2\frac{\exp(-k \delta z)(1-\exp(-k\delta z))}{z(1+\exp(-k\delta z))^3}
\end{align}
Observe that the temperature parameter $k$ comes from the surrogate to be simulated, while $\delta$ is used by \lzo. We compute the expected back-propagation threshold of \spgd for $m=1$ as, $\tilde{B}_{th} = \delta \, \E{\abs{z}}{z\sim \lambda }$, with,
\begin{align}
\E{\abs{z}}{z\sim \lambda } &= 2a^2 \int_{0}^{\infty} z  \frac{\exp(-az)(1-\exp(-az))}{z(1+\exp(-az))^3} dz\nonumber =\frac{a}{2} = 0.7659.
\end{align}
\subsubsection{Fast Sigmoid}
Consider also the Fast Sigmoid surrogate gradient \cite{zenke2018superspike, nieves2021sparse} that avoids computing the exponential function in Sigmoid to obtain the gradient: 
\begin{align*}
   \frac{dx}{du} = \frac{1}{(1 + k\abs{u})^2} =: g(u).
\end{align*}
We choose $\alpha=-1$ (note that $\alpha=1$ does not work in this case) and apply theorem \ref{thm: main2} so that,
\begin{align*}
   c&=-2\delta^2\int_{0}^\infty \frac{1}{z^\alpha}g'(z\delta)dz =4\delta^2\int_{0}^\infty \frac{1}{z^\alpha}\frac{k\sign(z \delta )}{(1 + k\abs{z \delta})^3}dz \\
   &=4\delta^2 k \int_{0}^\infty \frac{z}{(1 + k\delta z)^3}dz = \frac{4}{k} \int_{0}^\infty \frac{t}{(1 + t)^3}dt = \frac{2}{k}.
\end{align*}
The PDF is then given by
\begin{align}
\lambda(z)= -\frac{1}{c} \frac{\delta^2}{z^\alpha}g'(z \delta) = k^2\delta^2  \frac{z \sign(z \delta)}{(1 + k\abs{z \delta})^3}. 
\end{align}
To compute the expected back-propagation threshold, we observe,
\begin{align*}
    \tilde{B}_{th}=\delta\E{\abs{z}}{z\sim \lambda } = 2\delta^3 k^2 \int_{0}^{\infty}\frac{z^2 \sign(z \delta)}{(1 + k\abs{z \delta})^3}dz = 2\delta^3 k^2  \int_{0}^{\infty}\frac{ z^2 }{(1 + kz \delta)^3}dz = \frac{2}{k}  \int_{0}^{\infty}\frac{ x^2 }{(1 + x)^3}dx
\end{align*}
The above integral does not converge. However, if we consider finite support [-a, a], we may compute,
$\frac{2}{k}  \int_{0}^{a}\frac{ x^2 }{(1 + x)^3}dx$


\end{document}